\theoremstyle{plain}
\newtheorem{myTheorem}{Theorem}
\newtheorem{myLemma}{Lemma}
\newtheorem{myCorollary}{Corollary}
\begin{document}

\title{Discretize Relaxed Solution of Spectral Clustering via A Non-Heuristic Algorithm}

\author{Hongyuan Zhang and Xuelong Li,\IEEEmembership{~Fellow,~IEEE}\\

\thanks{This work is supported by The National Natural Science Foundation of China (No. 61871470).}

\thanks{The authors are with the School of Artificial Intelligence, OPtics and ElectroNics (iOPEN), Northwestern Polytechnical University, Xi'an 710072, Shaanxi, P. R. China.}


\thanks{E-mail: hyzhang98@gmail.com, li@nwpu.edu.cn}

\thanks{Corresponding author: Xuelong Li}

\thanks{The source code is available at \url{https://github.com/hyzhang98/first-order-discretization}.}

}

\markboth{IEEE TRANSACTIONS ON NEURAL NETWORKS AND LEARNING SYSTEMS}
{Zhang \MakeLowercase{\textit{et al.}}: Discretize Relaxed Solution of Spectral Clustering via A Non-Heuristic Algorithm}

\maketitle

\begin{abstract}
    Spectral clustering and its extensions usually consist of two steps: 
    (1) constructing a graph and computing the relaxed solution; 
    (2) discretizing relaxed solutions. 
    Although the former has been extensively investigated, the discretization 
    techniques are mainly heuristic methods, \textit{e.g.}, $k$-means, 
    spectral rotation. 
    Unfortunately, the goal of the existing methods is not to find 
    a discrete solution that minimizes the original objective. 
    In other words, the primary drawback is the neglect of the original objective  
    when computing the discrete solution. 
    Inspired by the first-order optimization algorithms, we propose to  
    develop a first-order term to bridge the original problem and discretization algorithm, 
    which is the first non-heuristic to the best of our knowledge. 
    Since the non-heuristic method is aware of the original graph cut problem, 
    the final discrete solution is more reliable and achieves the preferable loss value. 
    We also theoretically show that the continuous optimum is beneficial to discretization algorithms 
    though simply finding its closest discrete solution is an existing heuristic algorithm which is also unreliable. 
    Sufficient experiments significantly show the superiority of our method. 
\end{abstract}

\begin{IEEEkeywords}
    Spectral clustering, discretization, first-order algorithm, 
    non-heuristic algorithm.
\end{IEEEkeywords}

\section{Introduction}
Spectral clustering \cite{RCut,NCut,SC} has been widely applied in practice due to its ability 
to exploit the non-Euclidean property of data. 
Spectral clustering originates from the graph cut problem, 
\textit{e.g.}, Ratio Cut \cite{RCut}, Normalized Cut \cite{NCut}, Balanced Cut \cite{BalancedCut}, 
Improved Normalized Cut \cite{ImprovedNCut}. 
The procedure of spectral clustering and its variants usually consist of two phases: 
(1) Construct a graph and calculate the relaxed solution; 
(2) Compute the discrete solution. 
In general, 
the first step is to convert an arbitrary dataset into a graph so that the 
clustering is equivalent to partitioning a graph into several cohesive 
disjointed subsets of vertices, 
which is a well-known graph cut problem. 
Since most graph cut problems are NP-hard, most spectral clustering models 
turn to solve the continuously relaxed problem, 
which is usually easy to compute the optimum. 
After obtaining the continuous solution, an essential step is to compute 
an approximated discrete solution according to the continuous solution, 
which corresponds to step 2.

Compared with step 2, the strong extensions of spectral clustering \cite{CAN,AdaGAE,AnchorGAE,ProjectedClustering,AddedRef-1,AddedRef-2,AddedRef-3,AddedRef-4} 
prefer to focus on step 1, \textit{i.e.}, 
how to construct an effective graph that captures the potential topology of data. 
Specially, CLR \cite{CLR} and CAN \cite{CAN} attempt to directly construct a graph with 
$c$ connected components (where $c$ is the number of clusters) 
so that step 2 could be omitted. 

Although step 1 has been extensively investigated in recent decades, 
the study of step 2 is relatively limited. 
The most popular technique to discretize the continuous solution  
is to run $k$-means on the relaxed solutions \cite{NCut}. 
It is a heuristic method since it does not aim to find the optimal discrete optimum, 
even when $k$-means converges to its optimum. 
Literature \cite{SC} provides a convincing explanation: $k$-means can compute the nearly optimal 
partitions from the relaxed solution provided that the graph is easy to 
be cut to $c$ connected components. 
Another technique is to directly find the closest discrete solution regarding 
Euclidean distance, namely spectral rotation \cite{SR,ISR}. 
It can be also regarded as heuristic since the closest solution regarding 
Euclidean distance is usually not the optimal solution, 
which is elaborated in succeeding sections. 

Aiming at designing a reliable method to compute the discrete solution 
from the continuous optimum, we propose a non-heuristic discretization algorithm 
and the contributions are summarized as follows:
\textbf{(1)} Inspired by the first-order gradient-based algorithms, a \textbf{non-heuristic} 
algorithm is proposed in this paper, which is the \textbf{first} non-heuristic method to the best of our knowledge. 
The proposed framework bridges the original graph cut functions and discretization 
algorithm via the \textbf{gradient}. 
\textbf{(2)} Although simply finding the nearest discrete solution under Euclidean distance 
is unreliable, we theoretically show that starting from the continuous optimum is beneficial and meaningful. 
\textbf{(3)} Experiments strongly verify the effectiveness of our idea. 
The proposed method significantly outperforms other discretization methods on numerous datasets. 

\section{Preliminary}

\subsection{Notations}
In this paper, all vectors and matrices are denoted by lower-case and 
upper-case letters in bold, respectively. 
Define $\mathcal{B}_{a \times b} = \{\bm Y \in \mathbb{R}^{a \times b} | Y_{ij} \in \{0, 1\}, \sum_{j=1}^b Y_{ij} = 1\}$. 
$[\bm U, \bm \Sigma, \bm V] \leftarrow \textrm{SVD}(\bm M)$ represents the singular-value decomposition procedure, 
where $\bm U$ is the left-singular vector, $\bm \Sigma$ is the singular value matrix, and $\bm V$ is the right-singular vector. 
$\mathbbm{1}\{\cdot\}$ is the indicator function. 
$\|\cdot\|$ represents the Frobenius-norm and $\ell_2$-norm for matrices and vectors, 
respectively. $\langle \cdot, \cdot \rangle$ denotes the inner-product. 
$n$ and $c$ denote the number of data points and clusters, respectively. 

\subsection{Revisit the Discretization Works}
As shown in \cite{RCut,NCut,SC,ImprovedNCut}, the spectral clustering with different graph cut problems
can be generally summarized as 
\begin{equation} \label{problem_raw}
    \min _{\bm G \in \mathcal{G}_{n \times c}, \bm Y \in \mathcal{B}_{n \times c}} {\rm tr} (\bm G^T \bm L \bm G) , 
\end{equation}
where $\bm L$ represents some Laplacian matrix, 
$\mathcal{G}_{n \times c} = \{f(\bm Y) | \bm Y \in \mathcal{B}_{n \times c}\}$, 
and $f(\bm Y)$ is some transformation of $\bm Y$. 
The row of $\bm G$ is usually 1-sparse so that the clustering assignments are 
directly given. 
Remark that the specific formulations of both $\bm L$ and $f(\bm Y)$ are 
decided by the used graph cut problem. 
The optimization of the above problem is NP-hard and most existing works 
turn to solve the continuously relaxed problem 
\begin{equation} \label{problem_raw_continuous}
    \min _{\bm F \in \mathcal{D}_f} {\rm tr}(\bm F^T \bm L \bm F) = \mathcal{L} (\bm F), 
\end{equation}
where $\mathcal{D}_f$ represents some continuous superset of the original 
feasible domain of $\bm G$, \textit{i.e.}, 
$\mathcal{G}_{n \times c} \subseteq \mathcal{D}_f$. 
After relaxation, the optimum could be calculated within polynomial 
time and the optimal solution is represented as $\bm F_*$. 
Each row vector $\bm f_*^i$ is regarded as the relaxed cluster indicator and 
the popular method \cite{NCut,SC} is to run $k$-means on $\{\bm f_*^i\}_{i=1}^n$. 
Literature \cite{SC} provides a reasonable explanation for $k$-means. 
Instead of using $k$-means, some works \cite{SR,ISR}
aim to compute the closest discrete solution regarding the Euclidean distance, 
\begin{equation} \label{problem_spectral_rotation}
    \min_{\bm G \in \mathcal{G}_{n \times c}, \bm R^T \bm R = \bm I} \mathcal J_{\textrm{ISR}} (\bm Y) =  \|\bm F_* \bm R - \bm G \|^2 . 
\end{equation}
Recently, some researchers \cite{DNC,MDNC} turn to directly solve the original non-convex 
problem, \textit{i.e.}, problem (\ref{problem_raw}), by means of the 
re-weighted optimization \cite{re-weighted}. 
However, as problem (\ref{problem_raw}) is non-convex and NP-hard, 
these methods are not guaranteed to approach the optimum.

\section{Methodology}
In this section, we formally show the deficiencies of the existing discretization methods at first, 
in order to clarify our motivation. 
Then we generally elaborate on the idea that aims to utilize the non-heuristic information 
to discretize the continuous solution. 
Finally, a specific case is used to testify the effectiveness of the idea. 
To begin with, we define two crucial variables: 
the discrete optimum 
$\bm G_* = \arg \min_{\bm G \in \mathcal{G}_{n \times c}} {\rm tr}(\bm G^T \bm L \bm G)$
and the closest discrete solution regarding Euclidean distance, 
$\bm G_\dag = \arg \min_{\bm G \in \mathcal{G}_{n \times c}, \bm R^T \bm R = \bm I} \|\bm G - \bm F_* \bm R\|^2 .$

\begin{figure}[t]
    \centering
    \includegraphics[width=0.9\linewidth]{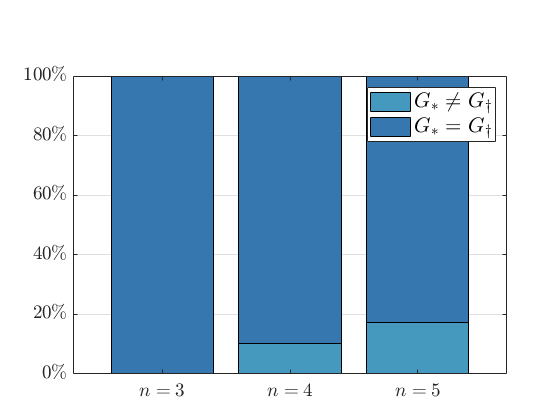}
    \caption{Empirical illustration of our motivation: $\bm G_\dag$ 
    represents the closest discrete solution regarding Euclidean distance and $\bm G_*$
    represents the optimal discrete solutions. We test on 10,000 randomly synthetic graphs 
    and the target number of clusters is set as 2. With the increase of $n$, 
    $\bm G_\dag \neq \bm G_*$ becomes more and more frequent, which indicates that 
    the conventional methods may be unreliable. }
    \label{figure_toy}
\end{figure}

\subsection{Our Motivation}
To quickly understand the limitation of spectral rotation, we firstly provide a specific instance as follows. 
Suppose that the graph matrix is 
\begin{equation}
    \bm S = 
    \left [
    \begin{array}{c c c c}
        0 & 0.5 & 0.1 & 0.8 \\
        0.5 &  0 & 0.4 & 0.2 \\
        0.1 & 0.4 &  0 & 0.5 \\
        0.8 & 0.2 & 0.5 &  0 \\
    \end{array}
    \right ], 
\end{equation}
and the target is to partition the graph into 2 clusters regarding 
the Ratio Cut problem. 
Since there are only 7 feasible discrete solutions, we 
can obtain the discrete optimum via the enumeration and the continuous 
optimum via the eigendecomposition, 
\begin{equation}
    \bm G_* = 
    \left [
    \begin{array}{c c}
        \frac{1}{\sqrt{2}} & 0 \\
        0 & \frac{1}{\sqrt{2}} \\
        0 & \frac{1}{\sqrt{2}} \\
        \frac{1}{\sqrt{2}} & 0 \\
    \end{array}
    \right ]
    ~ \textrm{and} ~
    \bm F_* \approx 
    \left [
    \begin{array}{c r}
        0.5 & 0.5556 \\
        0.5 & 0.0629 \\
        0.5 & -0.8073 \\
        0.5 & 0.1888
    \end{array}
    \right ]. 
\end{equation}
Nevertheless, the optimal solution of problem (\ref{problem_spectral_rotation}), 
\textit{i.e.}, 
$\bm G_\dag$, 
is $[1/\sqrt{3}, 0; 1/\sqrt{3}, 0; 1/\sqrt{3}, 0; 0, 1]$. 
Moreover, we also simulate on random graphs composed of 3/4/5 vertices which 
is shown in Figure \ref{figure_toy}. 
With the increase of $n$, 
the proportion of $\bm G_\dag \neq \bm G_*$ becomes larger. 
Empirically, it may be an improper choice for the discretization of $\bm F_*$.  

On the other hand, it is groundless to simply employ the $k$-means. 
More formally, the $k$-means can be formulated as 
\begin{equation} \label{problem_kmeans}
    \begin{split}
        & \min _{\bm Y \in \mathcal{B}_{n \times c}, \bm M} \| \bm F_* - \bm Y \bm M \|^2 \\ 
        = & \min _{\bm Y \in \mathcal{B}_{n \times c}} \|\bm F_* - \bm Y(\bm Y^T \bm Y)^{-1} \bm Y^T \bm F_*\|^2 \\ 
        = & \min _{\bm Y \in \mathcal{B}_{n \times c}} \mathcal J_{k\textrm{-means}} (\bm Y) , 
    \end{split}
\end{equation}
where 
$\bm M$ represents the centroid matrix and 
the transformation is based on taking the derivative \textit{w.r.t.} $\bm M$. 
In particular, for the Ratio Cut problem, the relationship between $k$-means and the improved spectral rotation 
can be stated more formally as follows. 
\begin{myLemma} \label{lemma}
    For any two matrices $\bm A, \bm B \in \mathbb R^{n \times c}$ that satisfy 
    $\bm A^T \bm A = \bm B^T \bm B = \bm I$, 
    the singular values of $\bm A^T \bm B$ are smaller than 1. 
\end{myLemma}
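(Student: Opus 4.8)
The plan is to reduce the claim to a statement about the spectral norm (largest singular value), since controlling the largest singular value automatically bounds all of them. Writing $\bm M = \bm A^T \bm B \in \mathbb{R}^{c \times c}$, I would recall that the singular values of $\bm M$ are the nonnegative square roots of the eigenvalues of $\bm M^T \bm M = \bm B^T \bm A \bm A^T \bm B$, so it suffices to show that every eigenvalue of this symmetric positive semidefinite matrix is at most $1$.

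The key observation is that $\bm A^T \bm A = \bm I$ forces $\bm P := \bm A \bm A^T$ to be an orthogonal projection: it is symmetric, and $\bm P^2 = \bm A (\bm A^T \bm A) \bm A^T = \bm A \bm A^T = \bm P$. Consequently $\bm 0 \preceq \bm P \preceq \bm I$ in the positive semidefinite (Loewner) order, because a projection has eigenvalues only in $\{0,1\}$. Conjugating this inequality by $\bm B$ gives
\begin{equation}
    \bm M^T \bm M = \bm B^T \bm P \bm B \preceq \bm B^T \bm I \bm B = \bm B^T \bm B = \bm I,
\end{equation}
where the last equality uses the hypothesis $\bm B^T \bm B = \bm I$. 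Hence every eigenvalue of $\bm M^T \bm M$ lies in $[0,1]$, and taking square roots shows that every singular value of $\bm A^T \bm B$ is at most $1$.

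An equivalent and even shorter route is submultiplicativity of the spectral norm: since $\bm A$ and $\bm B$ have orthonormal columns, $\|\bm A\|_2 = \|\bm B\|_2 = 1$, so the largest singular value satisfies $\sigma_{\max}(\bm A^T \bm B) = \|\bm A^T \bm B\|_2 \le \|\bm A^T\|_2 \, \|\bm B\|_2 = 1$; the same bound then applies to all singular values. A third phrasing uses the variational characterization directly: for unit vectors $\bm x, \bm y$, one has $\bm x^T \bm A^T \bm B \bm y = \langle \bm A \bm x, \bm B \bm y \rangle \le \|\bm A \bm x\| \, \|\bm B \bm y\| = 1$ by Cauchy–Schwarz, using $\|\bm A \bm x\| = \|\bm x\|$.

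I expect no serious obstacle, as the argument is elementary. The one point I would flag is that the bound is genuinely $\le 1$ rather than strict: equality occurs precisely when the column spaces of $\bm A$ and $\bm B$ share a common direction (for instance $\bm A = \bm B$ gives $\bm A^T \bm B = \bm I$ with all singular values equal to $1$). I would therefore read the lemma's phrase ``smaller than $1$'' as ``not greater than $1$'' and state the conclusion as $\sigma_i(\bm A^T \bm B) \le 1$ for all $i$, which is what the projection inequality delivers.
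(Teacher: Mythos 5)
Your proof is correct and takes essentially the same route as the paper's: both arguments reduce to bounding the eigenvalues of the Gram matrix of $\bm A^T \bm B$ using the fact that $\bm X \bm X^T$ is an orthogonal projection when $\bm X^T \bm X = \bm I$ --- your Loewner-order conjugation $\bm B^T (\bm A \bm A^T) \bm B \preceq \bm B^T \bm B = \bm I$ is just the operator-level packaging of the paper's pointwise Rayleigh-quotient bound $\bm \alpha^T \bm B \bm B^T \bm \alpha \le 1$ with the roles of $\bm A$ and $\bm B$ swapped. Your observation that the conclusion should read $\sigma_i \le 1$ rather than strictly ``smaller than 1'' is also well taken, since equality occurs (e.g., $\bm A = \bm B$), and the paper itself invokes the lemma in this non-strict form ($\sigma_i \leq 1$) in the proof of Theorem 1.
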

\begin{proof}
    For any $\bm u \in \mathbb{R}^c$, we have 
    \begin{equation}
        \frac{\|\bm A \bm u\|}{\|\bm u\|} = \sqrt \frac{\bm u^T \bm A^T \bm A \bm u}{\bm u^T \bm u} \leq 1.
    \end{equation}
    Accordingly, for any $\bm u \in \mathbb{R}^{c}$, 
    \begin{equation}
        \frac{\bm u^T \bm A^T \bm B \bm B^T \bm A \bm u}{\bm u^T \bm u} = \bm \alpha^T \bm B \bm B^T \bm \alpha \leq 1, 
    \end{equation}
    where $\bm \alpha = \bm A \bm u / \|\bm u\|$. Hence, the lemma is proved. 
\end{proof}
\begin{myTheorem} 
    For the spectral clustering using the Ratio Cut, 
    there exists a real number $0 \leq \epsilon < 3$ such that 
    \begin{equation}
        \mathcal J_{k\textrm{-means}} (\bm Y) \leq \mathcal{J}_{\textrm{ISR}} (\bm Y) \leq (1 + \epsilon) \mathcal J_{k\textrm{-means}} (\bm Y) .
    \end{equation}
\end{myTheorem}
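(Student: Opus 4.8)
The plan is to reduce both objectives to symmetric functions of the singular values of a single $c \times c$ matrix, after which both inequalities become elementary. Write $\bm A = \bm F_*$ and $\bm B = \bm G = \bm Y(\bm Y^T \bm Y)^{-1/2}$, the orthonormal indicator associated with the Ratio Cut, so that $\bm A^T \bm A = \bm B^T \bm B = \bm I$. Let $\sigma_1, \dots, \sigma_c$ denote the singular values of $\bm A^T \bm B$; Lemma \ref{lemma} guarantees $0 \le \sigma_i \le 1$ for every $i$, which will be the decisive input for the right-hand inequality.

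First I would rewrite $\mathcal J_{k\textrm{-means}}$. Since $\bm Y(\bm Y^T\bm Y)^{-1}\bm Y^T = \bm B \bm B^T$ is the orthogonal projector onto the range of $\bm Y$ and is idempotent,
\begin{equation}
    \mathcal J_{k\textrm{-means}}(\bm Y) = {\rm tr}\big(\bm A^T(\bm I - \bm B\bm B^T)\bm A\big) = c - \|\bm A^T \bm B\|^2 = \sum_{i=1}^c (1 - \sigma_i^2).
\end{equation}
Next I would rewrite $\mathcal{J}_{\textrm{ISR}}$. Expanding and using $\bm A^T\bm A = \bm B^T\bm B = \bm I$ gives $\|\bm A\bm R - \bm B\|^2 = 2c - 2\,{\rm tr}(\bm R^T\bm A^T\bm B)$. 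The remaining minimization over orthogonal $\bm R$ is the orthogonal Procrustes problem: writing the SVD $\bm A^T\bm B = \bm U\bm\Sigma\bm V^T$, the von Neumann trace bound yields $\max_{\bm R^T\bm R=\bm I}{\rm tr}(\bm R^T\bm A^T\bm B) = \sum_{i}\sigma_i$, attained at $\bm R = \bm U\bm V^T$. Hence
\begin{equation}
    \mathcal{J}_{\textrm{ISR}}(\bm Y) = 2\sum_{i=1}^c (1 - \sigma_i).
\end{equation}

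Both inequalities then follow termwise from these two closed forms. The left inequality is
\begin{equation}
    \mathcal{J}_{\textrm{ISR}}(\bm Y) - \mathcal J_{k\textrm{-means}}(\bm Y) = \sum_{i=1}^c \big[2(1-\sigma_i) - (1-\sigma_i^2)\big] = \sum_{i=1}^c (1-\sigma_i)^2 \ge 0,
\end{equation}
which needs no assumption on the range of $\sigma_i$. For the right inequality I would take $\epsilon = 1 < 3$ and compute
\begin{equation}
    (1+\epsilon)\mathcal J_{k\textrm{-means}}(\bm Y) - \mathcal{J}_{\textrm{ISR}}(\bm Y) = 2\sum_{i=1}^c (1-\sigma_i^2) - 2\sum_{i=1}^c (1-\sigma_i) = 2\sum_{i=1}^c \sigma_i(1-\sigma_i) \ge 0,
\end{equation}
where the nonnegativity of each summand is precisely where Lemma \ref{lemma} ($\sigma_i \le 1$) enters.

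I expect the main obstacle to lie entirely on the ISR side: correctly setting up the Procrustes maximization and invoking the von Neumann trace bound to collapse the minimization over $\bm R$ into $\sum_i \sigma_i$. The $k$-means reduction is a routine projection computation, and once the two objectives are expressed through $\{\sigma_i\}$, both inequalities are one-line algebraic identities whose signs are controlled by $\sigma_i \in [0,1]$. As a sanity check, the bound is in fact tight at $\epsilon = 1$ (approached as all $\sigma_i \to 0$), so the stated constant $3$ is comfortably loose and the claim holds with room to spare.
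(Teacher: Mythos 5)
Your proof is correct, and for the harder (right-hand) inequality it takes a genuinely different and sharper route than the paper. Both arguments reduce everything to the singular values $\sigma_i$ of $\bm F_*^T \bm G$ and both derive the same closed forms $\mathcal J_{k\textrm{-means}}(\bm Y) = \sum_i (1-\sigma_i^2)$ and $\mathcal{J}_{\textrm{ISR}}(\bm Y) = 2\sum_i(1-\sigma_i)$; your left-hand inequality, $\mathcal{J}_{\textrm{ISR}} - \mathcal J_{k\textrm{-means}} = \sum_i(1-\sigma_i)^2 \geq 0$, is exactly the paper's. The difference is in the upper bound: the paper routes through a triangle inequality, splitting $\mathcal{J}_{\textrm{ISR}}^{1/2}$ into $\mathcal J_{k\textrm{-means}}^{1/2}$ plus a Procrustes term $\mathcal{J}_t^{1/2}$ with $\mathcal{J}_t = \sum_i(1-\sigma_i)^2$, and then bounds $\mathcal{J}_t \leq \varepsilon \mathcal J_{k\textrm{-means}}$ with the data-dependent constant $\varepsilon = \max_i \frac{1-\sigma_i}{1+\sigma_i} \leq 1$, yielding $\epsilon = \varepsilon + 2\sqrt{\varepsilon}$, which can be as large as $3$ (and in the degenerate case $\sigma_i = 0$ actually hits $3$, straining the theorem's strict inequality $\epsilon < 3$). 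You bypass the triangle inequality entirely and compare the two closed forms termwise, observing $2(1-\sigma_i^2) - 2(1-\sigma_i) = 2\sigma_i(1-\sigma_i) \geq 0$ by Lemma \ref{lemma}, which gives the \emph{uniform} constant $\epsilon = 1$ and, via your $\sigma_i \to 0$ remark, shows this constant is tight. In effect you prove a strictly stronger statement ($\mathcal{J}_{\textrm{ISR}} \leq 2\mathcal J_{k\textrm{-means}}$ always) with less machinery; it is worth noting that the paper already has the closed form $\mathcal{J}_{\textrm{ISR}} = 2c - 2\,{\rm tr}(\bm\Sigma)$ in hand when proving the left inequality, but does not exploit it for the right one, which is why its bound is lossier than yours.
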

\begin{proof}
    In the Ratio Cut, $G_{ij} = 1/\sqrt{|\mathcal{C}_j|}$ if the $i$-th 
    data point belongs to the $j$-th cluster. 
    Formally, $\bm G = f(\bm Y) = \bm Y (\bm Y^T \bm Y)^{-1/2}$. 
    Following the triangle inequality, 
    \begin{align*}
        \mathcal{J}_{\textrm{ISR}}^{\frac{1}{2}} (\bm Y) = & \min _{\bm R^T \bm R = \bm I} \|\bm F_*  -  \bm G \bm R^T\| \\
        \leq & \|\bm F_* - \bm Y(\bm Y^T \bm Y)^{-1} \bm Y^T \bm F_*\| \\
        & + \min _{\bm R^T \bm R = \bm I} \| \bm Y(\bm Y^T \bm Y)^{-1} \bm Y^T \bm F_* - \bm G \bm R^T\| \\
        = & \mathcal J_{k\textrm{-means}}^{\frac{1}{2}} (\bm Y) + \min _{\bm R^T \bm R = \bm I} \| \bm Y(\bm Y^T \bm Y)^{-1} \bm Y^T \bm F_* - \bm G \bm R^T\|.
    \end{align*}
    Clearly, we can substitute $\bm G = \bm Y (\bm Y^T \bm Y)^{-1/2}$ into the latter term and get 
    \begin{align*}
        & \min _{\bm R^T \bm R = \bm I} \|\bm Y (\bm Y^T \bm Y)^{-\frac{1}{2}} \bm R^T - \bm Y(\bm Y^T \bm Y)^{-1} \bm Y^T \bm F_* \|^2 \\ 
        = & {\rm tr}(\bm R \bm R^T) + {\rm tr}(\bm F_*^T \bm Y (\bm Y^T \bm Y)^{-1} \bm Y^T \bm F_*) \\ 
        & - \max _{\bm R \bm R = \bm I} 2 {\rm tr}(\bm R (\bm Y^T \bm Y)^{-\frac{1}{2}} \bm Y^T \bm F_*) . 
    \end{align*}
    Let $\bm M = (\bm Y^T \bm Y)^{-\frac{1}{2}} \bm Y^T \bm F_*$ 
    and $[\bm U, \bm \Sigma, \bm V] \leftarrow {\rm svd}(\bm M)$. 
    Note that 
    \begin{equation}
        \begin{split}
            {\rm tr}(\bm R \bm M) & = {\rm tr}(\bm R \bm U \bm \Sigma \bm V^T) = {\rm tr}(\bm \Sigma \bm V^T \bm R \bm U) \\ 
            & \leq \sum \Sigma_{ii} (\bm V^T \bm R \bm U)_{ii} \leq {\rm tr}(\bm \Sigma) , 
        \end{split}
    \end{equation}
    where $\bm V^T \bm R \bm U$ is orthonormal so that $(\bm V^T \bm R \bm U)_{ij} \leq 1$. 
    Accordingly, 
    \begin{equation}
        \mathcal{J}_t (\bm Y) = c + {\rm tr}(\bm \Sigma^2) - 2 {\rm  tr}(\bm \Sigma) = \sum_{i} (1-\sigma_i)^2 , 
    \end{equation}
    where $\sigma_i$ is the $i$-th singular value. 
    On the other hand, 
    \begin{align*}
        \mathcal{J}_{k\textrm{-means}} (\bm Y) & = \|\bm F_* - \bm Y(\bm Y^T \bm Y)^{-1} \bm Y^T \bm F_*\|^2 \\
        & = c + {\rm tr}(\bm M^T \bm M) - 2 {\rm tr}(\bm M^T \bm M) \\
        & = c - {\rm tr}(\bm \Sigma^2) = \sum_i (1 - \sigma_i) (1 + \sigma_i) . 
    \end{align*}
    Accordingly, we have 
    \begin{equation} \notag
        \varepsilon \mathcal{J}_{k\textrm{-means}} (\bm Y) - \mathcal{J}_t (\bm Y) = \sum_i (1-\sigma_i) (\varepsilon - 1 + (\varepsilon+1)\sigma_i). 
    \end{equation}
    According to Lemma \ref{lemma}, $\sigma_i \leq 1$. 
    Therefore, if $\forall i, \sigma_i \geq \frac{1 - \varepsilon}{1 + \varepsilon}$ 
    (\textit{i.e.}, $\varepsilon \geq \max_i \frac{1-\sigma_i}{1+\sigma_i}$), 
    then $\mathcal{J}_t(\bm Y) \leq \varepsilon \mathcal J_{k\textrm{-means}}(\bm Y)$ . 
    Furthermore, we have $\mathcal{J}_{\textrm{ISR}} (\bm Y) \leq (1+\sqrt{\varepsilon})^2 \mathcal J_{k\textrm{-means}} (\bm Y)$. 
    Let $\epsilon = \varepsilon + 2 \sqrt{\varepsilon}$ 
    and the right inequality is proved. 

    On the other hand, another part can be easily proved by 
    \begin{equation}
        \begin{split}
            & \mathcal{J}_{\textrm{ISR}} (\bm Y) - \mathcal{J}_{k\textrm{-means}} (\bm Y) \\
            = & 2 c - 2 {\rm tr}(\bm \Sigma) - c + {\rm tr}(\bm \Sigma^2) 
            = \sum_i (1-\sigma_i )^2 \geq 0 , 
        \end{split}
    \end{equation}
    which completes the proof. 
\end{proof}
Based on the above theorem, we can conclude that the solutions that 
cause small $k$-means losses will also be relatively preferable solutions 
of spectral rotation. 
The conclusion can be formulated as 
\begin{myCorollary}
    For the Ratio Cut problem, if 
    $(1+\epsilon) \mathcal{J}_{k\textrm{-means}} (\bm Y_1) \leq \mathcal J_{k\textrm{-means}} (\bm Y_2)$ 
    (where $\epsilon$ is taken from Lemma \ref{lemma}) holds for any two clustering assignments $\bm Y_1$ and $\bm Y_2$, 
    then $\mathcal{J}_{\textrm{ISR}} (\bm Y_1) \leq \mathcal{J}_{\textrm{ISR}} (\bm Y_2)$. 
\end{myCorollary}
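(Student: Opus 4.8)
The plan is to obtain the statement by a direct transitive chaining of the two-sided bound established in the above theorem, using the hypothesis as the bridge between the two assignments. The only structural point I must settle beforehand is that a single constant $\epsilon$ simultaneously validates $\mathcal{J}_{k\textrm{-means}}(\bm Y) \leq \mathcal{J}_{\textrm{ISR}}(\bm Y) \leq (1+\epsilon)\,\mathcal{J}_{k\textrm{-means}}(\bm Y)$ for both $\bm Y_1$ and $\bm Y_2$; once this uniformity is secured, the conclusion is a one-line argument.

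First I would fix the uniform constant. The proof of the theorem chooses $\epsilon = \varepsilon + 2\sqrt{\varepsilon}$ for any $\varepsilon \geq \max_i \frac{1-\sigma_i}{1+\sigma_i}$, where the $\sigma_i$ are the singular values of $\bm M = (\bm Y^T \bm Y)^{-1/2} \bm Y^T \bm F_*$ and hence depend on the particular $\bm Y$. Because the feasible set is finite and Lemma~\ref{lemma} guarantees $\sigma_i \leq 1$ (so each ratio is at most $1$), I would take
\begin{equation} \notag
    \varepsilon = \max_{\bm Y \in \mathcal{B}_{n \times c}} \max_i \frac{1 - \sigma_i(\bm Y)}{1 + \sigma_i(\bm Y)},
    \qquad \epsilon = \varepsilon + 2\sqrt{\varepsilon}.
\end{equation}
This $\epsilon$ is well defined, satisfies $\epsilon < 3$ whenever the singular values stay positive, and makes the two-sided bound hold with one and the same constant for every feasible assignment, in particular for both $\bm Y_1$ and $\bm Y_2$.

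With $\epsilon$ fixed in this way, I would read off three inequalities in order: the right half of the theorem applied to $\bm Y_1$, then the hypothesis, and finally the left half of the theorem applied to $\bm Y_2$. Concatenating them gives
\begin{equation} \notag
    \mathcal{J}_{\textrm{ISR}}(\bm Y_1)
    \leq (1+\epsilon)\, \mathcal{J}_{k\textrm{-means}}(\bm Y_1)
    \leq \mathcal{J}_{k\textrm{-means}}(\bm Y_2)
    \leq \mathcal{J}_{\textrm{ISR}}(\bm Y_2),
\end{equation}
which is precisely the claim.

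I expect no genuine obstacle in the algebra, since the body of the argument is a transitive chain. The only step demanding care is the uniformity of $\epsilon$: the theorem as literally phrased yields an $\epsilon$ tied to a specific $\bm Y$ through its singular values, so one must verify that a single constant serves the whole finite feasible domain. The maximum displayed above supplies exactly this, and it is for this reason that Lemma~\ref{lemma}, bounding $\sigma_i \leq 1$, is invoked in the statement of the corollary.
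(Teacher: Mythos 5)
Your proof is correct and follows essentially the route the paper intends: the corollary is stated without an explicit proof precisely because it is the transitive chain $\mathcal{J}_{\textrm{ISR}}(\bm Y_1) \leq (1+\epsilon)\mathcal{J}_{k\textrm{-means}}(\bm Y_1) \leq \mathcal{J}_{k\textrm{-means}}(\bm Y_2) \leq \mathcal{J}_{\textrm{ISR}}(\bm Y_2)$ obtained by applying the two halves of the theorem to $\bm Y_1$ and $\bm Y_2$ respectively. Your additional step of uniformizing $\epsilon$ over the finite feasible set (taking the maximum of $\max_i \frac{1-\sigma_i(\bm Y)}{1+\sigma_i(\bm Y)}$ over $\bm Y \in \mathcal{B}_{n\times c}$) is a legitimate and worthwhile patch to a point the paper glosses over, since the theorem's $\epsilon$ is tied to a particular $\bm Y$ through its singular values, and the corollary's attribution of $\epsilon$ to Lemma~\ref{lemma} is itself imprecise.
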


\begin{figure}
    \centering
    \includegraphics[width=0.9\linewidth]{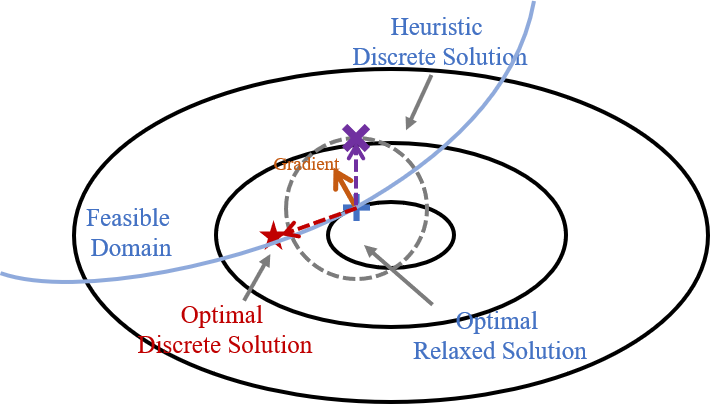}
    \caption{Visual demonstration of our motivation: All heuristic methods are 
    based on Euclidean distance, which is inconsistent with the metric used in the 
    original graph cut problem. The introduction of the gradient term can provide 
    information of the correct direction for optimization. }
    \label{figure_motivation}
\end{figure}

\subsection{A Non-Heuristic Method via Introducing Gradient}
Rethink the problem discussed above and we can find that the shortcoming of 
the existing methods are mainly caused by \textit{the neglect of the original optimization 
problem}. In other words, both problem (\ref{problem_spectral_rotation}) and 
(\ref{problem_kmeans}) is independent of the original objective defined 
in (\ref{problem_raw}). 
Therefore, the first problem we intend to address is \textbf{\textit{how to make the discretization algorithm aware 
of the original problem via introducing the gradient}}.

In the beginning, the residual is defined as 
$\bm \Delta_t = \bm G_t - \bm F_* \bm R_t$ where 
$\bm R_t = \arg \min_{\bm R^T \bm R = \bm I} \| \bm G_t - \bm F_* \bm R\|$. 
It should be pointed out that $\bm R_t$ is used since for any continuous solution $\bm F$, 
there exists a set $\{\bm F \bm R | \bm R^T \bm R = \bm I\}$ that leads to the same loss. 
With the definition of $\bm \Delta$, if $\|\bm \Delta_1\| = \|\bm \Delta_2\|$ 
and $\|\bm \Delta_1\|$ is small enough, the following inequality, 
\begin{equation}
    \langle \bm \Delta_1, \nabla_{\bm F = \bm F_* \bm R_1} \mathcal{L} \rangle
    < \langle \bm \Delta_2, \nabla_{\bm F = \bm F_* \bm R_2} \mathcal{L} \rangle , 
\end{equation}
will indicate that ${\rm tr}(\bm G_1^T \bm L \bm G_1) < {\rm tr}(\bm G_2^T \bm L \bm G_2)$. 
Figure \ref{figure_motivation} visually shows the core idea of our framework. 

The second important question is raised according to \cite{DNC}: \textbf{\textit{Is the continuous optimum $\bm F_*$ necessary?}} 
Or formally, is the term $\|\bm F_* \bm R - \bm G\|$ beneficial to the discretization algorithms?
To answer this question, we define a lower-bound metric, $\rho(\bm \Delta)$, of $\|\bm \Delta\|$ 
due to that the Laplacian matrix $\bm L$ is positive semi-definite. 
Let $\bm \Delta_\dag$ and $\bm \Delta_*$ be the residual matrix with 
$\bm G = \bm G_\dag$ and $\bm G = \bm G_*$, respectively.
The following theorem shows that under this metric, 
$\rho(\bm \Delta_*)$ will not exceed $\rho(\bm \Delta_\dag)$ too much. 
\begin{myTheorem} \label{theo_delta}
    Given a discrete solution $\bm G$, define 
    $\rho(\bm \Delta) = \|\bm F_\bot^T \bm \Delta\|$ where the columns of $\bm F_\bot$ are 
    eigenvectors corresponding to non-zero eigenvalues of $\bm L$. 
    Then the following inequality holds: 
    \begin{equation}
        \rho^2(\bm \Delta_*) \leq \frac{\lambda_{\max}}{\lambda_{\min}} \rho^2 (\bm \Delta_\dag) + \mathcal{O}(1) , 
    \end{equation}
    where $\lambda_{\min}$ and $\lambda_{\max}$ represent the minimum and maximum non-zeros 
    eigenvalues, respectively. 
\end{myTheorem}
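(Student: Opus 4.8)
The plan is to exploit the positive semi-definiteness of $\bm L$ to sandwich the cut objective between two multiples of $\|\bm F_\bot^T \bm G\|^2$, and then to convert back to $\rho$ by controlling the $n$-independent discrepancy between $\|\bm F_\bot^T \bm G\|$ and $\rho(\bm \Delta)$. First I would write the eigendecomposition $\bm L = \bm F_\bot \bm \Lambda_\bot \bm F_\bot^T$, where $\bm \Lambda_\bot = {\rm diag}(\lambda_1, \lambda_2, \ldots)$ collects the non-zero eigenvalues and the null-space eigenvectors drop out. Setting $\bm H = \bm F_\bot^T \bm G$, a direct expansion gives ${\rm tr}(\bm G^T \bm L \bm G) = \sum_i \lambda_i \|\bm h_i\|^2$ (summing over the rows $\bm h_i$ of $\bm H$), whence the sandwich
\[
    \lambda_{\min} \|\bm F_\bot^T \bm G\|^2 \leq {\rm tr}(\bm G^T \bm L \bm G) \leq \lambda_{\max} \|\bm F_\bot^T \bm G\|^2
\]
holds for every feasible $\bm G$.

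Next I would invoke the optimality of $\bm G_*$. Since $\bm G_*$ minimizes the cut, ${\rm tr}(\bm G_*^T \bm L \bm G_*) \leq {\rm tr}(\bm G_\dag^T \bm L \bm G_\dag)$. Chaining this with the lower bound on the left and the upper bound on the right yields $\lambda_{\min} \|\bm F_\bot^T \bm G_*\|^2 \leq \lambda_{\max} \|\bm F_\bot^T \bm G_\dag\|^2$, that is, $\|\bm F_\bot^T \bm G_*\|^2 \leq \frac{\lambda_{\max}}{\lambda_{\min}} \|\bm F_\bot^T \bm G_\dag\|^2$. This already produces the condition-number factor appearing in the statement; the remaining work is purely to replace $\|\bm F_\bot^T \bm G\|$ by $\rho(\bm \Delta)$.

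For that final conversion I would use $\bm G = \bm \Delta + \bm F_* \bm R$, so that $\bm F_\bot^T \bm G = \bm F_\bot^T \bm \Delta + \bm F_\bot^T \bm F_* \bm R$. Expanding the squared norm gives $\|\bm F_\bot^T \bm G\|^2 = \rho^2(\bm \Delta) + 2 \langle \bm F_\bot^T \bm \Delta, \bm F_\bot^T \bm F_* \bm R \rangle + \|\bm F_\bot^T \bm F_*\|^2$. The crucial structural observation is that $\bm F_*$ has exactly $c$ orthonormal columns, so $\|\bm F_\bot^T \bm F_*\|^2 \leq \|\bm F_*\|^2 = c$, a constant independent of $n$; moreover $\bm G^T \bm G = \bm I$ for the Ratio Cut forces $\|\bm F_\bot^T \bm G\| \leq \|\bm G\| = \sqrt c$ and hence $\rho(\bm \Delta) \leq 2\sqrt c$, so all of $\|\bm F_\bot^T \bm G_*\|$, $\|\bm F_\bot^T \bm G_\dag\|$, $\rho(\bm \Delta_*)$ and $\rho(\bm \Delta_\dag)$ are $\mathcal O(1)$ in $n$. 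By Cauchy--Schwarz the cross term has magnitude at most $2 \rho(\bm \Delta) \sqrt c$, also $\mathcal O(1)$, so $\|\bm F_\bot^T \bm G\|^2 = \rho^2(\bm \Delta) + \mathcal O(1)$ for both $\bm G_*$ and $\bm G_\dag$. Substituting these two identities into the inequality from the previous step gives $\rho^2(\bm \Delta_*) + \mathcal O(1) \leq \frac{\lambda_{\max}}{\lambda_{\min}} \big( \rho^2(\bm \Delta_\dag) + \mathcal O(1) \big)$, and absorbing the bounded remainder into a single $\mathcal O(1)$ term establishes the claim.

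The main obstacle, and the place where care is required, is the bookkeeping of which quantities are genuinely $\mathcal O(1)$ in $n$: the whole theorem is meaningful only because $\bm F_*$ spans a fixed $c$-dimensional subspace and the discrete solutions are column-orthonormal, which together pin down $\|\bm F_\bot^T \bm F_*\|$ and the residual norms independently of the sample size. One must also be explicit that $\mathcal O(1)$ is understood with $c$ and the spectral ratio $\lambda_{\max}/\lambda_{\min}$ held fixed, since the constant inherited from the conversion step carries a factor of $\lambda_{\max}/\lambda_{\min}$; treating that ratio as a structural constant of the graph is what keeps the additive error $\mathcal O(1)$.
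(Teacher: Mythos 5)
Your proof is correct, and it takes a genuinely different route from the paper's. The paper works \emph{inside} the loss: writing $\bm G = \bm \Delta + \bm F_* \bm R$ and using the eigenvector property $\bm L \bm F_* = \bm F_* \bm \Lambda_c$, it decomposes $\mathcal{L}(\bm G) = -\mathcal{L}(\bm F_*) + \mathcal{L}(\bm \Delta) + 2\,{\rm tr}(\bm G^T \bm F_* \bm \Lambda_c \bm R)$, sandwiches $\mathcal{L}(\bm \Delta)$ between $\lambda_{\min}\rho^2(\bm \Delta)$ and $\lambda_{\max}\rho^2(\bm \Delta)$, bounds the difference of the cross terms by $\sum_{i=1}^c \lambda_i$ via an SVD/Procrustes argument (this is where its Lemma~1 enters), and finishes with a conditional (contrapositive-style) argument from $\mathcal{L}(\bm G_*) \leq \mathcal{L}(\bm G_\dag)$. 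You instead sandwich $\mathcal{L}(\bm G)$ itself by $\|\bm F_\bot^T \bm G\|^2$, invoke optimality directly, and push all the error bookkeeping into the conversion $\|\bm F_\bot^T \bm G\|^2 = \rho^2(\bm \Delta) + \mathcal{O}(1)$, which needs only $\bm F_*^T \bm F_* = \bm I$, $\bm G^T \bm G = \bm I$, and Cauchy--Schwarz. Your route buys three things: it is more elementary (no loss decomposition, no SVD trace bound, no conditional step); its constants are explicit ($5c$ for the conversion, $5c(1+\lambda_{\max}/\lambda_{\min})$ overall); and, notably, it never uses the fact that $\bm F_*$ is an eigenvector matrix of $\bm L$ --- only its column-orthonormality --- so your argument applies verbatim to any orthonormal reference matrix, not just the relaxed optimum. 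What the paper's route buys is a sharper additive term: its constant is $2\sum_{i=1}^c \lambda_i / \lambda_{\min}$, where the $\lambda_i$ are the $c$ \emph{smallest} eigenvalues of $\bm L$, so in exactly the regime where spectral clustering is meaningful (the graph nearly splits into $c$ components and these eigenvalues are close to zero) the paper's $\mathcal{O}(1)$ term nearly vanishes, whereas your bound is never smaller than $10c$. Two of your side remarks are fine as stated: the caveat that $\mathcal{O}(1)$ is read with $c$ and $\lambda_{\max}/\lambda_{\min}$ fixed matches how the paper itself uses the notation (its constant also depends on the spectrum), and your restriction of the wording to Ratio Cut is immaterial, since the paper's Normalized-Cut embedding $\bm G = \bm D^{1/2}\bm Y(\bm Y^T \bm D \bm Y)^{-1/2}$ also satisfies $\bm G^T \bm G = \bm I$.
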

\begin{proof}
    At first, we can decompose $\mathcal{L}(\bm G)$ as 
    \begin{align*}
        \mathcal{L}(\bm G) 
        & = - \mathcal{L}(\bm F_*) + \mathcal{L}(\bm \Delta) + 2 {\rm tr}(\bm G^T \bm F_* \bm \Lambda_c \bm R) ,
    \end{align*}
    where $\bm \Lambda_c \in \mathbb R^{c \times c}$ is a diagonal matrix with $c$ smallest eigenvalues of $\bm L$.
    Let $\bm F_0$ be the matrix consisting of eigenvectors corresponding to 0 
    and $\bm U = [\bm F_0, \bm F_\bot]$ denotes all eigenvectors of $\bm L$. 
    Therefore, $\mathcal{L}(\bm \Delta)$ can be rewritten as 
    \begin{align*}
        \mathcal{L}(\bm \Delta) = \|\bm \Lambda^\frac{1}{2} \bm U^T \bm \Delta_\dag\|^2 = \|0^\frac{1}{2} \bm F_0^T \bm \Delta\|^2 + \|\bm \Lambda_\bot^\frac{1}{2} \bm F_\bot^T \bm \Delta\|^2 
    \end{align*}
    where $\bm \Lambda_\bot$ is a diagonal matrix composed of all non-zero eigenvalues. 
    From the above formulation, we can derive the following inequality, 
    \begin{equation}
        \lambda_{\min} \rho^2(\bm \Delta) \leq \mathcal{L}(\bm \Delta) \leq \lambda_{\max} \rho^2(\bm \Delta) . 
    \end{equation}
    According to $\bm R \bm G^T \bm F_* = \bm V \bm \Sigma \bm V^T$ where $\bm \Sigma$ and $\bm V$ are the singular 
    value matrix and right-singular vector matrix, we have 
    \begin{align*}
        & {\rm tr}(\bm G^T_\dag \bm F_* \bm \Lambda_c \bm R_\dag) - {\rm tr}(\bm G^T \bm F_* \bm \Lambda_c \bm R) \\
        = & {\rm tr}(\bm R_\dag \bm G^T_\dag \bm F_* \bm \Lambda_c) - {\rm tr}(\bm R \bm G^T \bm F_* \bm \Lambda_c) \\
        = & {\rm tr}(\bm V_\dag \bm \Sigma_\dag \bm V_\dag^T \bm \Lambda_c) - {\rm tr}(\bm V \bm \Sigma \bm V^T \Lambda_c) \leq \sum_{i=1}^c \lambda_i . 
    \end{align*}
    Clearly, provided that 
    \begin{equation}
        \lambda_{\min} \rho^2(\bm \Delta) - \lambda_{\max} \rho^2 (\bm \Delta_\dag) \geq 2 \sum_{i=1}^c \lambda_i , 
    \end{equation}
    add the two inequality and we can obtain 
    \begin{equation}
        \mathcal{L}(\bm \Delta) - \mathcal{L}(\bm \Delta_\dag) \geq 2 {\rm tr}(\bm G^T_\dag \bm F_* \bm \Lambda_c \bm R_\dag) - 2 {\rm tr}(\bm G^T \bm F_* \bm \Lambda_c \bm R) , 
    \end{equation}
    which indicates $\mathcal{L}(\bm G) \geq \mathcal{L}(\bm G_\dag)$. 
    Since $\mathcal{L}(\bm G_*) \leq \mathcal{L}(\bm G_\dag)$, 
    we have 
    \begin{equation}
        \rho^2(\bm \Delta_*) \leq \frac{\lambda_{\max}}{\lambda_{\min}} \rho^2(\bm \Delta_\dag) + \mathcal{O}(1) . 
    \end{equation}
    To sum up, the theorem is proved. 
\end{proof}
From the definition, $\rho(\bm \Delta)$ is a lower-bound of $\|\bm \Delta\|$ due to $\|\bm \Delta\|^2 = \|[\bm F_0, \bm F_\bot]^T \bm \Delta\|^2 \geq \|\bm F_\bot^T \bm \Delta\|^2 = \rho^2(\bm \Delta)$. 
It indicates that $\rho^2(\bm \Delta_*) \leq (\lambda_{\max} / \lambda_{\min}) \|\bm \Delta_\dag\|^2 + \mathcal{O}(1)$. 
Although it can not directly provide the connection between $\|\bm \Delta_*\|$ and $\|\bm \Delta_\dag\|$, 
the theoretical analysis implies that minimizing $\|\bm \Delta\|$ is beneficial to finding the optimum. 

According to the discussion about the two questions, we propose to optimize the following problem, 
\begin{equation} \label{problem_proposal}
    \min \limits_{\bm R^T \bm R = \bm I, \bm G \in \mathcal{G}_{n \times c}} \|\bm F_* \bm R - \bm G \|^2 
    + \eta \langle \bm \Delta, \nabla_{\bm F = \bm F_* \bm R} \mathcal{L} \rangle ,
\end{equation}
where $\eta$ is a trade-off parameter. 
The first term is based on Theorem \ref{theo_delta} and it aims to 
minimize $\|\bm \Delta\|$. 
The second term is the gradient term used to bridge the graph cut problems 
and discretization algorithms, which is much easier to optimize due to its 
first-order property. 
To sum up, 
compared with $k$-means and spectral rotation, 
the above method takes the original problem into account. 
On the other hand, it can speed up the optimization and is more possible to 
find a better discrete solution compared with the direct methods \cite{DNC,MDNC}. 

\subsection{A Specific Case for 2 Popular Graph Cut Functions}

In this subsection, we focus on a specific formulation, $\bm G = f(\bm Y) = \bm D^{\frac{1}{2}} \bm Y (\bm Y^T \bm D \bm Y)^{-\frac{1}{2}}$, 
which contains the most popular two graph cut problems, Ratio Cut and Normalized Cut. 
Clearly, if $\bm D = \bm I$, then the above formulation corresponds 
to Ratio Cut. And if $\bm D$ is diagonal and $\bm D_{ii}$ 
is the degree of the $i$-th node, 
then it corresponds to Normalized Cut. 
Let $\mathcal{G}_{a \times b}' = \{\bm D^{\frac{1}{2}} \bm Y (\bm Y^T \bm D \bm Y)^{-\frac{1}{2}} | \bm Y \in \mathcal{B}_{a \times b}\}$. 
Take the derivative of problem (\ref{problem_raw_continuous}) and obtain 
$\nabla_{\bm F} \mathcal{L} = 2 \bm L \bm F$. 
Expand Eq. (\ref{problem_proposal}) and we have 
\begin{align*}
    & \|\bm F_* \bm R - \bm D^{\frac{1}{2}} \bm Y (\bm Y^T \bm D \bm Y)^{-\frac{1}{2}}\|^2 
    + \eta \langle \bm \Delta,  2\bm L \bm F_* \bm R \rangle \\
    = & 2 {\rm tr} (\bm I)  - 2 {\rm tr}(\bm F_*^T \bm D^{\frac{1}{2}} \bm Y (\bm Y^T \bm D \bm Y)^{-\frac{1}{2}} \bm R^T) \\
    & - 2 \eta {\rm tr}(\bm F_*^T \bm L \bm F_*) 
    + 2 \eta {\rm tr}(\bm R^T \bm F_*^T \bm L \bm D^{\frac{1}{2}} \bm Y (\bm Y^T \bm D \bm Y)^{-\frac{1}{2}}) \\
    = & 2\sum_{i=1}^c (1 - \eta \lambda_i(\bm L)) \\
    & - 2 {\rm tr}((\bm R^T \bm F_*^T - \eta \bm R^T \bm F_*^T \bm L) \bm D^{\frac{1}{2}} \bm Y (\bm Y^T \bm D \bm Y)^{-\frac{1}{2}}) . 
\end{align*}
Therefore, problem (\ref{problem_proposal}) is equivalent to 
\begin{equation} \label{problem_proposal_final}
    \max _{\bm R^T \bm R = \bm I, \bm G \in \mathcal{G}_{n \times c}'} {\rm tr}((\bm R^T \bm F_*^T - \eta \bm R^T \bm F_*^T \bm L) \bm G) , 
\end{equation}
where we use the notation $\bm G$ for simplicity. 
In the subsequent part, we will elaborate on the optimization of problem (\ref{problem_proposal_final}), 
which is based on the alternative method. 
The optimum of subproblem to solve $\bm R$ is given by SVD. 
The subproblem regarding $\bm Y$ (\textit{i.e.}, $\bm G$) is solved by computing each optimal row 
vector greedily. 

\begin{algorithm}[t]
    \centering
    \caption{Algorithm to optimize problem (\ref{problem_proposal_final}). }
    \label{alg}
    \begin{algorithmic}
        \REQUIRE Continuous optimum $\bm F_*$, some Laplacian matrix $\bm L$, and balance coefficient $\eta$. 
        \STATE Randomly initialize $\bm Y$.
        \WHILE {not converge}
            \STATE Solve $\bm R$ by Eq. (\ref{solution_R}). 
            \FOR {$i=1,2,\ldots,n$}{
                \STATE Compute the loss gain $\delta_{ij}$ defined in Eq. (\ref{definition_loss_gain}). 
                \STATE Compute $\bm y^i_{(\textrm{new})}$ according to Eq. (\ref{solution_Y}). 
                \STATE Update the $i$-th row if updating by $\bm y^i_{(\textrm{new})}$ does not cause the trivial $\bm Y$. 
            }
            \ENDFOR
        \ENDWHILE
        \ENSURE Clustering assignments $\bm Y$.
    \end{algorithmic}   
\end{algorithm}

\begin{table*}[t]
    \centering
    \renewcommand\arraystretch{1.1}
    \caption{Objective values of different discretization methods with two 
    graph cut functions on 5 tiny datasets. As DNC directly solves the Normalized 
    Cut problem, it is not reported in columns of Ratio Cut. 
    OPT denotes the value caused by the optimal \textbf{discrete} solution computed by enumeration.}
    \label{table_values_synthetic}
    \begin{tabular}{l c c c c c c | c c c c c c c}
        \hline 

        \hline
        \multirow{2}{*}{Dataset} & \multicolumn{6}{c|}{Ratio Cut} & \multicolumn{7}{c}{Normalized Cut}  \\
         & OPT & KM & KM-norm & SR & ISR & Ours & OPT & KM & KM-norm & SR & ISR & DNC & Ours \\
        \hline  
        \hline
        $n=11$ & 0.5275 & \textbf{0.5275} & 0.5814 & 0.5814 & 0.5814 & 0.5814 & 0.5927 & 0.5927 & 0.5927 & 0.5927 & 0.5927 & 0.5927 & \textbf{0.5927} \\ 
        $n=12$ & 0.4846 & 0.4973 & 0.4973 & 0.4973 & 0.4973 & \textbf{0.4973} & 0.5225 & 0.5681 & 0.5681 & 0.5681 & 0.5681 & 0.5279 & \textbf{0.5279} \\
        $n=13$ & 0.4211 & 0.5034 & 0.5034 & 0.5034 & 0.4801 & \textbf{0.4626} & 0.4582 & 0.4741 & 0.4782 & 0.4741 & 0.4600 & 0.4648 & \textbf{0.4600} \\
        $n=14$ & 0.3734 & 0.3902 & 0.3902 & 0.3902 & 0.3902 & \textbf{0.3734} & 0.3733 & 0.3865 & 0.3865 & 0.3865 & 0.3865 & 0.3865 & \textbf{0.3733} \\ 
        $n=15$ & 0.3479 & 0.3679 & 0.3679 & 0.3679 & 0.3679 & \textbf{0.3679} & 0.3602 & 0.3663 & 0.3663 & 0.3663 & 0.3663 & 0.3602 & \textbf{0.3602} \\
        \hline 

        \hline
    \end{tabular}
\end{table*}

\begin{table*}[t]
    \centering
    \renewcommand\arraystretch{1.1}
    \setlength{\tabcolsep}{2mm}
    \caption{Objective values of different discretization methods on two 
    graph cut functions. As DNC directly solves the Normalized 
    Cut problem, it is not reported in columns of Ratio Cut. 
    OPT$_r$ denotes the value caused by the optimal solution of the \textbf{relaxed} problem.}
    \label{table_values}
    \begin{tabular}{l c c c c c c | c c c c c c c}
        \hline 

        \hline
        \multirow{2}{*}{Dataset} & \multicolumn{6}{c|}{Ratio Cut} & \multicolumn{7}{c}{Normalized Cut}  \\
         & OPT$_r$ & KM & KM-norm & SR & ISR & Ours  & OPT$_r$ & KM & KM-norm & SR & ISR & DNC & Ours \\
        \hline
        \hline
        JAFFE & 0.0709 & 0.0971 & 0.0971 & 0.0971 & 0.0971 & \textbf{0.0970} & 0.0708 & 0.0972 & 0.0972 & 0.0972 & 0.0972 & 0.7213 & \textbf{0.0972} \\ 
        UMIST & 0.1954 & 0.7406 & 0.6514 & 0.7169 & 0.7130 & \textbf{0.6873} & 0.1949 & 0.7083 & 0.6285 & 0.7503 & 0.7342 & 1.6826 & \textbf{0.6602} \\
        ORL   & 2.3285 & 3.2941 & 3.9446 & 3.9344 & 3.2849 & \textbf{3.2447} & 2.4201 & 3.3675 & 3.7821 & 3.6690 & 3.3737 & 6.7980 & \textbf{3.3445}  \\
        YALE & 1.2146 & 1.7805 & 2.2890 & 2.2120 & 1.9750 & \textbf{1.7723} & 1.2949 & 1.9592 & 2.2273 & 2.1963 & 1.9947 & 3.4948 & \textbf{1.9297} \\ 
        COIL20  & 0.0489 & 0.2402 & 0.2591 & 0.2644 & 0.2402 & \textbf{0.2364} & 0.0487 & 0.2406 & 0.2595 & 0.2623 & 0.2406 & 2.3078 & \textbf{0.2368} \\
        MSRA & 0.0021 & 0.0030 & 0.0043 & 0.0064 & 0.0030 & \textbf{0.0030} & 0.0021 & 0.0030 & 0.0037 & 0.0064 & 0.0030 & 1.4622 & \textbf{0.0030} \\        
        WINE & 0.0033 & 0.0154 & 0.0402 & 0.0419 & 0.0154 & \textbf{0.0154} & 0.0033 & 0.0154 & 0.0402 & 0.0337 & 0.0154 & 0.1011 & \textbf{0.0154} \\
        GLASS & 0.1256 & 0.2876 & 0.2785 & 0.2785 & 0.2876 & \textbf{0.2754} & 0.1250 & 0.2638 & 0.2793 & 0.2903 & 0.2627 & 0.4701 & \textbf{0.2627}  \\
        SEGMENT & 0.0025 & 0.0042 & 0.0176 & 0.0144 & 0.0042 & \textbf{0.0042} & 0.0025 & 0.0041 & 0.0176 & 0.0144 & 0.0044 & 0.7842 & \textbf{0.0041} \\
        USPS & 0.0538 & 0.1816 & 0.1911 & 0.1903 & 0.1819 & \textbf{0.1816} & 0.0536 & 0.1819 & 0.1903 & 0.1907 & 0.1825 & 2.1704 & \textbf{0.1817} \\
        Fashion & 0.0440 & 0.2029 & 0.2097 & 0.2158 & 0.2027 & \textbf{0.2025} & 0.0439 & 0.2099 & 0.2102 & 0.2162 & 0.2002 & 2.1193 & \textbf{0.2004} \\
        \hline 

        \hline
    \end{tabular}
\end{table*}

\subsubsection{Optimization}
When $\bm Y$ is fixed, the subproblem to optimize is 
\begin{equation}
    \max _{\bm R^T \bm R = \bm I} {\rm tr}(\bm R^T (\bm F_*^T \bm G - \eta \bm F_*^T \bm L \bm G)) .
\end{equation}
According to the orthogonal Procrustes theorem \cite{Procrustes}, the optimum of the above problem is 
\begin{equation} \label{solution_R}
    \bm R = \bm U \bm V^T, 
\end{equation}
where $[\bm U, \bm \Sigma, \bm V] \leftarrow \textrm{SVD}(\bm F_*^T \bm G - \eta \bm F_*^T \bm L \bm G)$. 
When $\bm R$ is fixed, then the subproblem to maximize is formulated as 
\begin{equation}
    {\rm tr}(\bm M^T \bm D^{\frac{1}{2}} \bm Y (\bm Y^T \bm D \bm Y)^{-\frac{1}{2}}) 
    = \sum _{j=1}^c \frac{\sum_{i=1}^n \sqrt{D_{ii}} M_{ij} Y_{ij}}{\sqrt{\bm y_j^T \bm D \bm y_j}} ,
\end{equation}
where $\bm M = \bm F_* \bm R - \eta \bm L \bm F_* \bm R$. 
We optimize the above problem by greedily updating the $i$-th row, $\bm y^i$. 
Define the loss gain as 
\begin{equation} \label{definition_loss_gain}
    \begin{split}
    \delta_{ij} = & \frac{\sum_{k=1}^n \sqrt{D_{kk}} M_{kj} Y_{kj} + \sqrt{D_{ii}} M_{ij} (1-Y_{ij})}{\sqrt{\bm y_j^T \bm D \bm y_j + D_{ii} (1-Y_{ij})}} \\
    & - \frac{\sum_{k=1}^n \sqrt{D_{kk}} M_{kj} Y_{kj} - \sqrt{D_{ii}} M_{ij} Y_{ij}}{\sqrt{\bm y_j^T \bm D \bm y_j - D_{ii} Y_{ij}}} .
    \end{split}
\end{equation}
Note that the second term represents the objective value with $\bm y^i = \bm 0$
while the first term is the loss when $Y_{ij} = 1$. 
The $i$-th row is computed according to the maximum loss gain, 
\begin{equation} \label{solution_Y}
    \forall j, Y_{ij}^{\textrm{(new)}} = \mathbbm{1}\{j = \arg \max_j \delta_{ij}\}. 
\end{equation}

It should be emphasized that \textit{the above update formulation may violate the constraint 
of $\bm G \in \mathcal{G}_{n \times c}$}. 
Specifically speaking, the algorithm can partition samples into fewer clusters, 
\textit{i.e.}, $\bm g_i = 0$, so that it can obtain smaller function value. 
Therefore, a judgment is needed before the $i$-th row is updated, 
\begin{equation}
    \bm y^i = 
    \begin{cases}
        \bm y^i_{(\textrm{old})} & \textrm{if } \bm Y \textrm{ has zero columns} \\
        \bm y^i_{(\textrm{new})} & \textrm{else}
    \end{cases}
    , 
\end{equation}
where $\bm y^i_{(\textrm{new})}$ is defined in Eq. (\ref{solution_Y}) 
and $\bm y^i_{(\textrm{old})}$ represents the $i$-th row before this iteration.

The entire procedure is summarized in Algorithm \ref{alg}. 
Note that the judgment will not break the monotonous property of the optimization 
so that Algorithm \ref{alg} can converge into a local optimum. 
It should be also pointed out that we only update $\bm Y$ once 
in each iteration (rather than exactly solving $\bm Y$ in each iteration), 
which is inspired by the inexact ALM (IALM) \cite{IALM,LADMM}, to accelerate the algorithm.

\subsubsection{Complexity}
To solve the rotation matrix $\bm R$, it requires $\mathcal{O}(c^3)$ to 
perform SVD and the computation of $\bm L \bm F_*$ can 
be speeded up by the historical eigendecomposition used 
to compute $\bm F_*$. The total complexity of computing 
$\bm R$ is thus $\mathcal{O}(c^3 + n c^2)$. 
To update $\bm Y$ greedily, it needs $\mathcal{O}(nc^2)$ 
to update all rows of $\bm Y$. 
In sum, the computational complexity of Algorithm \ref{alg} is $\mathcal{O}(c^3 + n c^2)$.

\section{Experiments}
In this section, we aim to empirically investigate whether the proposed 
non-heuristic method works in practice. 
The primary criterion is whether the discrete solutions obtained 
by the proposed methods cause smaller losses compared with other 
existing methods. 

\begin{table}[t]
    \centering
    \renewcommand\arraystretch{1.0}
    \setlength{\tabcolsep}{4mm}
    \caption{Information of Datasets}
    \label{table_datasets}
    \begin{tabular}{l c c c}
        \hline 

        \hline
        Dataset & \# Samples & \# Features & \# Classes \\ 
        \hline
        \hline 
        JAFFE & 213 & 1,024 & 10 \\
        UMIST & 575 & 1,024 & 20 \\
        ORL & 400 & 1,024 & 40 \\
        YALE & 165 & 1,024 & 15 \\
        COIL20 & 1,440 & 1,024 & 20 \\
        MSRA & 1,799 & 256 & 12 \\
        WINE & 178 & 13 & 3 \\ 
        GLASS & 214 & 9 & 6 \\
        SEGMENT & 2,310 & 19 & 7 \\
        USPS & 9,298 & 256 & 10 \\
        Fashion & 10,000 & 784 & 10\\
        \hline 

        \hline
    \end{tabular}
\end{table}

\begin{table*}[t]
    \centering
    \renewcommand\arraystretch{1.1}
    \setlength{\tabcolsep}{3mm}
    \caption{Clustering accuracy of different discretization methods on two 
    graph cut functions. Similarly, DNC on Ratio Cut problem is neglected.}
    \label{table_acc}
    \begin{tabular}{l c c c c c | c c c c c c c c}
        \hline 

        \hline
        \multirow{2}{*}{Dataset} & \multicolumn{5}{c|}{Ratio Cut} & \multicolumn{6}{c}{Normalized Cut}  \\
        & KM & KM-norm & SR & ISR & Ours & KM & KM-norm & SR & ISR & DNC & Ours \\
        \hline
        \hline
        JAFFE & 0.9671 & 0.9671 & 0.9671 & 0.9671 & \textbf{0.9671} & 0.9671 & 0.9671 & 0.9671 & 0.9671 & 0.5399 & \textbf{0.9671} \\ 
        UMIST & 0.5670 & \textbf{0.6678}& 0.5826 & 0.5722 & 0.5965 & 0.6139 & 0.6278 & 0.6122 & 0.5930 & 0.3391 & \textbf{0.6296} \\
        ORL & 0.6150 & \textbf{0.6725} & 0.6475 & 0.6475 & 0.6475 & 0.6375 & \textbf{0.6850} & 0.6750 & 0.6625 & 0.4550 & 0.6625 \\
        YALE & 0.4303 & \textbf{0.4727} & 0.4606 & 0.4667 & 0.4606 & 0.4848 & \textbf{0.4909} & 0.4788 & 0.4606 & 0.2667 & 0.4848 \\ 
        COIL20 & 0.8396 & 0.8451 & \textbf{0.8451} & 0.8396 & 0.8403 & 0.8396 & 0.8451 & \textbf{0.8458} & 0.8396 & 0.1785 & 0.8396  \\
        MSRA & 0.5737 & 0.5614 & 0.5737 & 0.5737 & \textbf{0.5737} & 0.5737 & 0.5737 & 0.5737 & 0.5737 & 0.1684 & \textbf{0.5737} \\ 
        WINE & 0.7135 & 0.7303 & 0.6910 & 0.7135 & \textbf{0.7303} & 0.7135 & 0.7303 & 0.7247 & 0.7135 & 0.4775 & \textbf{0.7303} \\
        GLASS & 0.4019 & 0.4065 & 0.4065 & 0.4019 & \textbf{0.4299} & 0.4579 & 0.4720 & \textbf{0.4720} & 0.4533 & 0.3738 & 0.4579  \\
        SEGMENT & 0.4026 & \textbf{0.5104} & 0.4221 & 0.4364 & 0.4026 & 0.4026 & \textbf{0.5104} & 0.4021 & 0.4026 & 0.1779 & 0.4030 \\
        USPS & 0.6699 & \textbf{0.6707} & 0.6705 & 0.6698 & 0.6698 & 0.6692 & \textbf{0.6703} & 0.6703 & 0.6696 & 0.1738 & 0.6696 \\
        Fashion & 0.5395 & 0.5270 & 0.5356 & 0.5395 & \textbf{0.5404} & 0.5390 & 0.5275 & 0.5345 & 0.5387 & 0.1058 & \textbf{0.5404} \\
        \hline 

        \hline
    \end{tabular}
\end{table*}

\begin{table*}[t]
    \centering
    \renewcommand\arraystretch{1.1}
    \setlength{\tabcolsep}{3mm}
    \caption{Normalized mutual information of different discretization methods with two 
    graph cut functions.}
    \label{table_nmi}
    \begin{tabular}{l c c c c c | c c c c c c c c}
        \hline 

        \hline
        \multirow{2}{*}{Dataset} & \multicolumn{5}{c|}{Ratio Cut} & \multicolumn{6}{c}{Normalized Cut}  \\
        & KM & KM-norm & SR & ISR & Ours & KM & KM-norm & SR & ISR & DNC & Ours \\
        \hline
        \hline
        JAFFE & 0.9623 & 0.9623 & 0.9623 & 0.9623 & \textbf{0.9623} & 0.9623 & 0.9623 & 0.9623 & 0.9623 & 0.5063 & \textbf{0.9623} \\ 
        UMIST & 0.7799 & \textbf{0.8292} & 0.7918 & 0.7949 & 0.7918 & 0.8053 & 0.8109 & 0.8025 & 0.7934 & 0.4187 & \textbf{0.8140} \\
        ORL & 0.7922 & \textbf{0.8196} & 0.8108 & 0.8150 & 0.7987 & 0.8040 & \textbf{0.8212} & 0.8167 & 0.8098 & 0.6270 & 0.8207 \\
        YALE & 0.4968 & \textbf{0.5447} & 0.5255 & 0.4981 & 0.5023 & 0.5347 & 0.5305 & \textbf{0.5446} & 0.5108 & 0.2794 & 0.5209  \\ 
        COIL20 & 0.8981 & 0.8981 & 0.8981 & 0.8981 & \textbf{0.8990} & 0.8981 & 0.8981 & 0.8981 & 0.8981 & 0.1946 & \textbf{0.8981} \\
        MSRA & 0.7110 & 0.7110 & 0.7110 & 0.7110 & \textbf{0.7110} & 0.7110 & 0.7110 & 0.7110 & 0.7110 & 0.0965 & \textbf{0.7110}  \\ 
        WINE & 0.4195 & 0.4214 & 0.4016 & 0.4195 & \textbf{0.4214} & 0.4195 & 0.4214 & \textbf{0.4368} & 0.4195 & 0.1181 & 0.4124 \\
        GLASS & 0.2978 & \textbf{0.3344} & 0.2978 & 0.2978 & 0.3095 & 0.2959 & \textbf{0.3415} & 0.2942 & 0.2942 & 0.2515 & 0.2960 \\
        SEGMENT & 0.4312 & \textbf{0.5256} & 0.4249 & 0.4312 & 0.4312 & 0.4312 & 0.5256 & 0.4249 & 0.4312 & 0.0498 & 0.4325 \\
        USPS & 0.8276 & 0.8257 & 0.8259 & 0.8276 & \textbf{0.8276} & 0.8264 & 0.8264 & 0.8261 & 0.8276 & 0.0144 & \textbf{0.8276} \\
        Fashion & 0.5859 & 0.5935 & \textbf{0.5944} & 0.5856 & 0.5909 & 0.5854 & 0.5932 & \textbf{0.5938} & 0.5850 & 0.0061 & 0.5908 \\ 
        \hline 

        \hline
    \end{tabular}
\end{table*}

\subsection{Datasets}
Totally 11 real datasets are used to verify the effectiveness of the proposed 
methods, including JAFFE \cite{JAFFE}, UMIST, \cite{UMIST}, ORL \cite{ORL}, 
YALE \cite{YALE}, COIL20 \cite{COIL20}, MSRA \cite{MSRA}, three UCI datasets \cite{UCI} (WINE, GLASS, and SEGMENT), 
USPS \cite{USPS}, and test set of Fashion MNIST \cite{FashionMNIST} (denoted by Fashion). 
The details can be found in Table \ref{table_datasets}. 
To better show the performance of different discretization methods 
and guarantee reproducibility, we choose the first 10--15 samples 
to show the difference between the optimal discrete solution and solutions 
returned by diverse discretization methods.

\subsection{Experimental Settings}
To fairly show the feasibility of introducing gradient information, 
we collect 4 different methods as competitors, including $k$-means (\textit{KM}) \cite{NCut}, 
$k$-means on normalized $\bm F_*$ (\textit{KM-norm}) \cite{SC}, 
spectral rotation (\textit{SR}) \cite{SR}, improved spectral rotation (\textit{ISR}) \cite{ISR}, 
and directly solving normalized cut (\textit{DNC}) \cite{DNC}. 
Note that the investigations of discretization methods are limited 
though spectral clustering has been proposed for a long time 
and has been extensively studied. 
The graphs used in experiments are constructed according to the weight computation 
proposed in \cite{CAN} 
\begin{equation}
    S_{ij} = \frac{\max (d_{i\cdot}^{(k+1)} - d_{ij}, 0)}{ \sum _{j=1}^k d_{i\cdot}^{(k+1)} - d_{i \cdot }^{(j)}} , 
\end{equation} 
where $d_{ij} = \|\bm x_i - \bm x_j\|^2$ and $d_{i \cdot}^{(k)}$ 
represents the $k$-th smallest of $\{d_{ij}\}_{j=1}^n$. 
The sparsity, $k$, of neighbors is set as 10 in our experiments. 
We use the most popular two cut functions, Ratio Cut and Normalized Cut. 
The only hyper-parameter, $\eta$, of the proposed method is searched from 
$\{10^{-3}, 10^{-2}, 10^{-1}, 10^{0}, 10^{1}\}$. 

\textbf{The primary metric is the value of graph cut functions} defined in problem (\ref{problem_raw}) 
since the goal of the discretization algorithms is to find solutions that minimize problem (\ref{problem_raw}). 
In addition, we also report the function values of continuous optimum, which 
should be regarded as the baseline. 
As the proposed method without the judgment may decrease the function value 
by reducing the number of clusters, we also report the clustering accuracy 
and normalized mutual information though 
\textit{the small function values do not always correspond to better clustering partitions}. 
Note that if the proposed method returns a partition with less than $c$ clusters, 
then the clustering accuracy can not be computed.

The code of DNC is downloaded from the homepage of the authors and the codes of 
other methods are implemented under MATLAB 2019b. 

\begin{figure*}[t]
    \centering
    \subcaptionbox{JAFFE}{
        \includegraphics[width=0.23\linewidth]{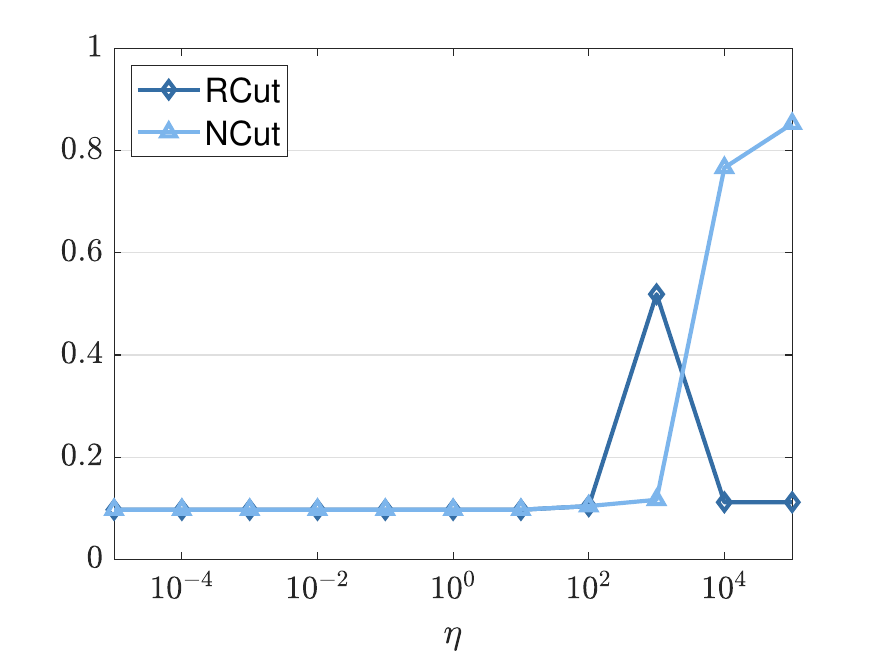}
    }
    \subcaptionbox{UMIST}{
        \includegraphics[width=0.23\linewidth]{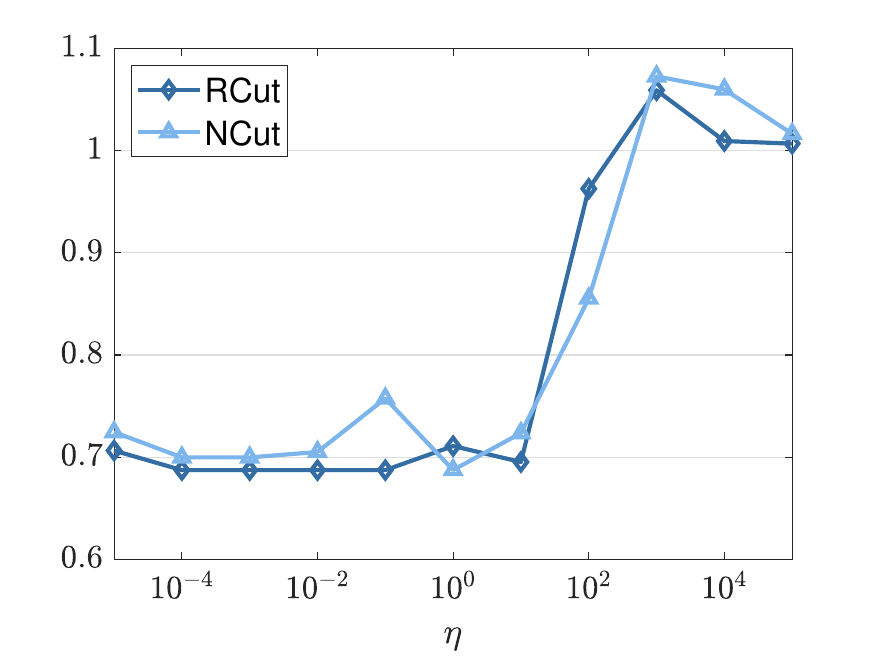}
    }
    \subcaptionbox{ORL}{
        \includegraphics[width=0.23\linewidth]{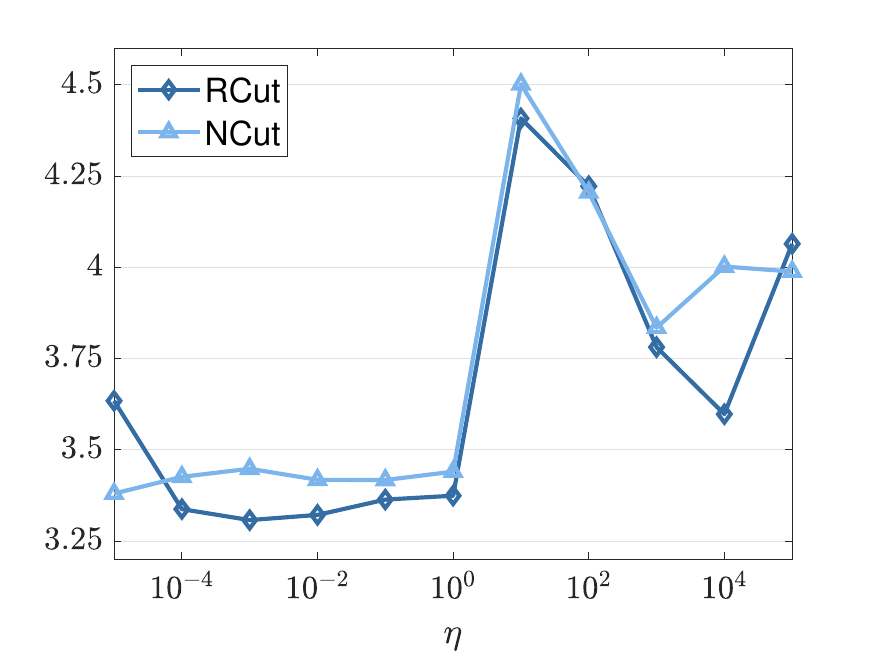}
    }
    \subcaptionbox{YALE}{
        \includegraphics[width=0.23\linewidth]{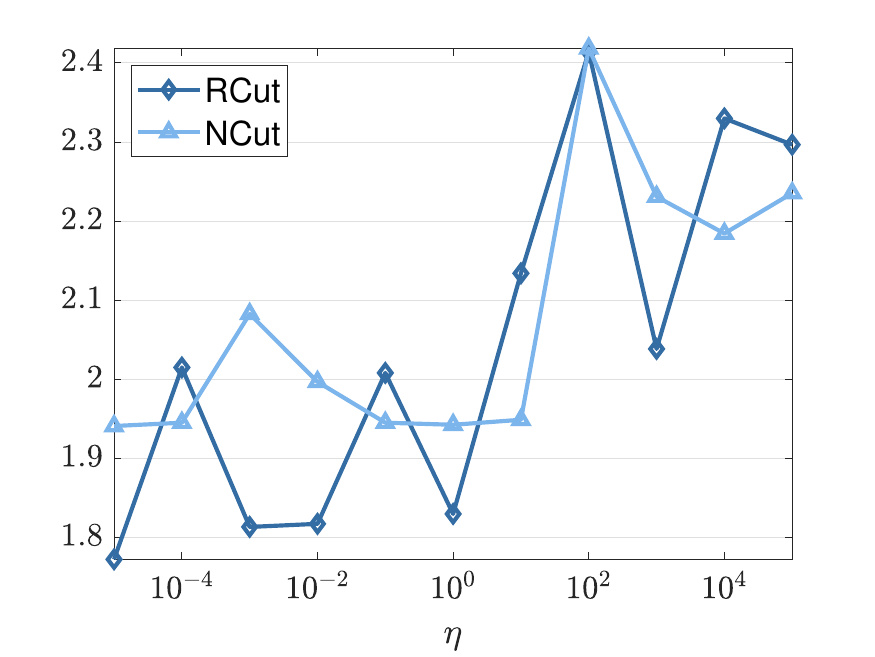}
    }
    \caption{The sensitivity study of function values to balance coefficient $\eta$. The $y$-axis represents the objective value. }
    \label{figure_sensitivity}
\end{figure*}

\begin{figure}
    \centering
    \subcaptionbox{JAFFE}{
        \includegraphics[width=0.45\linewidth]{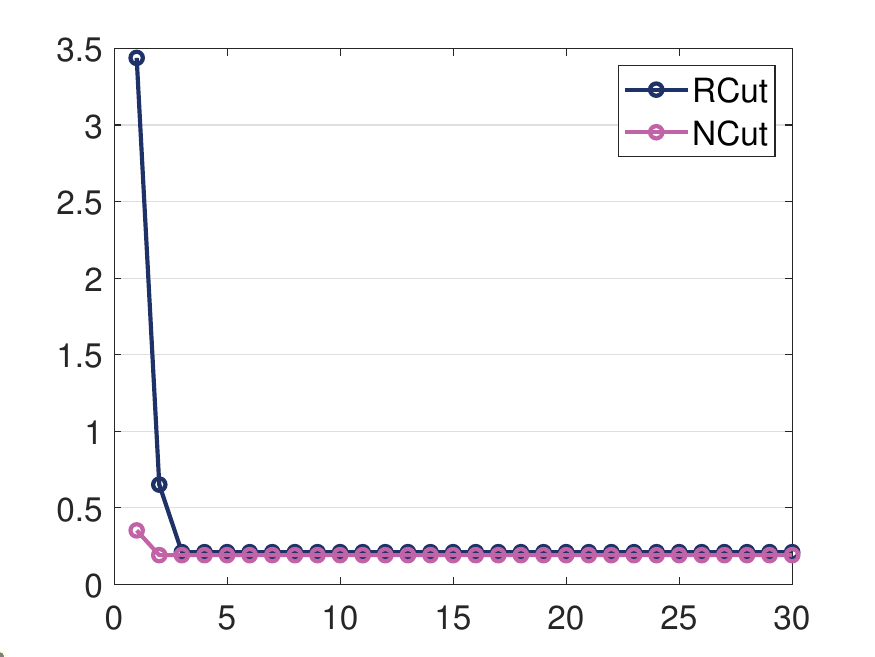}
    }
    \subcaptionbox{UMIST}{
        \includegraphics[width=0.45\linewidth]{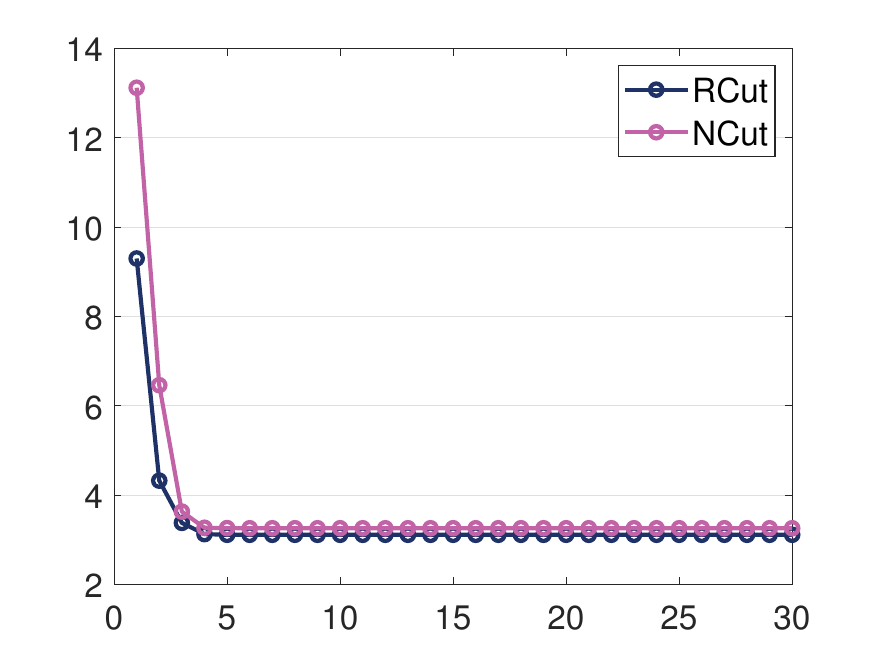}
    }
    \caption{Convergence curves of the proposed method. Although the algorithm is an inexact algorithm (similar to IALM), 
    the algorithm converges rapidly within 10 iterations. }
    \label{figure_convergence}
\end{figure}

\subsection{Main Results}
The objective values on tiny subsets of USPS are shown in Table \ref{table_values_synthetic}. 
We can find that the proposed method finds better discrete solutions when $n>11$. 
The values of Ratio Cut and Normalized Cut problems on 11 real datasets are summarized in Table 
\ref{table_values}. Clearly, our proposed method achieves the smallest values on 
all datasets. DNC starts from an arbitrary solution so that it usually 
can not find a satisfactory solution of an NP-hard problem. 
Although the intention of ISR is reasonable, it does not always outperform 
other competitors. On the one hand, the greedy optimization of ISR is not 
guaranteed to find the optimal solution. On the other hand, 
ISR is irrelevant to the original problem so that even the optimal solution obtained 
by ISR may be also undesired. 
The comparison between ISR and our method sufficiently shows the 
importance of the gradient term.

The clustering accuracy and normalized mutual information are reported 
in Tables \ref{table_acc} and \ref{table_nmi}. 
One may wonder why the proposed method sometimes achieves remarkable function values 
but does not obtains the best clustering metrics and concern the effectiveness of the proposed method. 
However, a fundamental assumption is that the used graph cut model is suitable 
for clustering on these datasets, which indicates that the better solution 
leads to better clustering results. 
If a discrete solution causes a small value but results in a bad clustering partition, 
it indicates the inappropriateness of the graph cut problem. 
In other words, it is not the mission of discretization algorithms to 
focus on how to improve the clustering metrics. 
As shown in Table \ref{table_values}, 
the proposed method always finds smaller discrete solutions compared with 
other existing methods. 
It means that the proposed method is a competent technique for discretization. 
It should be pointed out that we do not tune the construction of graphs for better clustering metrics 
since it is not the key to evaluating the proposed discretization method. 

In addition, we also show the convergence curves on JAFFE and UMIST in 
Figure \ref{figure_convergence} to show the impact of the inexact setting. 
Remark that compared with ISR, Algorithm \ref{alg} is inexact since the entire 
$\bm Y$ is only updated once in each iteration, 
which is similar to IALM \cite{IALM}. 
Nevertheless, the algorithm still converges fast, usually within 10 iterations.

\subsection{Sensitivity of Parameter}
To study the impact of the only hyper-parameter $\eta$, we testify 
the proposed method with different $\eta$ from a wider range, 
$\{10^{-5}, 10^{-4}, \ldots, 10^{5}\}$. 
The sensitivity curves are shown in Figure \ref{figure_sensitivity}. 
From the figure, we can conclude that too large $\eta$ causes instability, 
which is similar to the reason for the oscillation in neural networks \cite{Adam}. 
Apparently, $\eta = 10^{-3}$ is usually a desirable setting and we recommend to 
use this setting by default. 
It should be pointed out that $\eta$ can be also set by the simple search in practice. 
Although the supervised information is not provided in clustering, 
we can easily compute the value of different graph cut functions. 
Therefore, how to set an appropriate $\eta$ is not a problem in practice.

\section{Conclusion and Future Works} 
In this paper, we focus on how to design a non-heuristic discretization 
algorithm to outperform the existing methods. 
The idea is motivated by the fact that all existing methods split the 
original graph cut problems and the final discretization. 
We first theoretically and empirically show the drawbacks of existing discretization 
algorithms and therefore propose a first-order term to obtain 
the preferable discrete solution and meanwhile reduce the difficulty of 
solving the original NP-hard problem. 
We also theoretically point out the importance of the continuous optimum. 
Extensive experiments strongly support the theoretical analysis. 
The proposed method obtains significant improvements on all datasets and 
achieves state-of-the-art results. 

Although the impact of $\eta$ is empirically investigated, 
the theoretical analysis of $\eta$ is lacking, which is a focus problem 
in the conventional first-order gradient algorithms. 
In future work, a core topic is how to provide a theoretical range 
of $\eta$ and whether dynamically changing $\eta$ is feasible. 

\bibliographystyle{IEEEtran}

\bibliography{citations.bib}

\end{document}